\title {IMPACT: Iteratively Moderated Probably Approximately Correct Teaching}
\author{\name Carl Trimbach \email ctrimbac@cs.brown.edu\\ 
\name Michael Littman \email mlittman@cs.brown.edu\\ 
\addr Authors' address line one\\ 
Authors' address line two\\ 
Authors' address line three}
\newcommand{\prob}[1]{\text{Pr}\left(#1\right)}
\newtheorem{claim}{Claim}
\begin{document}

\begin{center}
\Large \textbf{Teaching with IMPACT} \\~ \\
\normalsize Carl Trimbach\quad ctrimbac@cs.brown.edu\\
Michael Littman\quad mlittman@cs.brown.edu\\
\end{center}
\vspace{20pt}

\begin{abstract}
Like many problems in AI in their general form, supervised learning is computationally intractable. We hypothesize that an important reason humans can learn highly complex and varied concepts, in spite of the computational difficulty, is that they benefit tremendously from experienced and insightful teachers. This paper proposes a new learning framework that provides a role for a knowledgeable, benevolent teacher to guide the process of learning a target concept in a series of ``curricular'' phases or rounds. In each round, the teacher's role is to act as a moderator, exposing the learner to a subset of the available training data to move it closer to mastering the target concept. Via both theoretical and empirical evidence, we argue that this framework enables simple, efficient learners to acquire very complex concepts from examples. In particular, we provide multiple examples of concept classes that are known to be unlearnable in the standard PAC setting along with provably efficient algorithms for learning them in our extended setting. A key focus of our work is the ability to learn complex concepts on top of simpler, previously learned, concepts---a direction with the potential of creating more competent artificial agents.
\end{abstract}

\section{Introduction}
In statistical learning theory, the PAC (probably approximately correct) model~\citep{valiant1984theory} has been a valuable tool for framing and understanding the problem of learning from examples. Many interesting and challenging problems can be analyzed and attacked in the PAC setting, resulting in powerful algorithms. Nevertheless, there exist many central and relevant concept classes that are provably unlearnable this way. Some of these learning problems are faced by human beings trying to accomplish everyday tasks, so declaring them to be too hard to learn is both counterintuitive and counterproductive. We contend that human learning of difficult tasks is greatly aided by the fact that other people have already mastered these same tasks and can organize the learner's experience in a way that simplifies the learning problem and restores tractability. 

In the standard PAC setting, a concept class is considered learnable if there exists an algorithm that, provided with bounded time and training experience, can output a classifier that has low error with high probability. This definition of ``learnability'' is a good measure of the complexity of a particular concept. However, there exist concepts that, under this strict definition of learnability, are provably impossible to learn given widely accepted cryptographic assumptions~\citep{kearns1994cryptographic}. Rather than conceding that such problems are too difficult to learn, one can analyze the problem under an altered model of learnability.

While there have been various proposed augmentations to the standard PAC learning setting, one very general model is Vapnik's privileged information model~\citep{vapnik2015learning}. In this setup, the training experience presented to the learner has standard domain data, $x$,  and labels, $y$, but additionally includes a vector of privileged information $\hat{x}$ for each training example. This supplementary information can be used by the learner during training time to make the task of choosing a classifier simpler by helping to highlight patterns in the data. In this setup, the training experience is not altered in any other way. At testing time, the learner only sees the data $x$ and must provide labels according to some chosen hypothesis. This model is particularly appealing because it deviates from the standard setting only during training time. It is not hard to imagine situations where advice, insights, or other assistance from a benevolent teacher helps a learner grasp a concept. The algorithm is evaluated in exactly the same way and by the same standards as in the standard PAC setting, though. For some prescribed input size, classification error, and confidence, we expect the algorithm to succeed within a bounded amount of time and with a bounded number of experiences.

One challenge presented by Vapnik's research is to find the types of privileged information that make certain problem classes learnable. Our work does so for multiple classes of problems that all have an underlying compositional structure. All of the models discussed in this paper, both prior work and our research, are what we refer to as iterative learning models. That is, the learning process relies on presenting the learner with new experience in a series of rounds. The structure of the iterative rounds themselves can be viewed as a specific type of privileged information. For example, imagine trying to capture the information available in a round-based training scheme with $k$ rounds to a learner in a single batch. The teacher can augment this batch of data with a $k$-bit privileged information vector where the $i^{th}$ bit of the string is $1$ if and only if that example would have been presented in round $i$, and $0$ otherwise. We will view the algorithms we discuss as iterative algorithms rather than privileged information algorithms, knowing they could be converted to the privileged information setting in this straightforward way. We will show that this ``round information'' is necessary and sufficient to learn certain concept classes. That is, for classes that are known to be unlearnable without this added information, we will present an algorithm that takes advantage of this information to learn efficiently. We will prove theoretically and verify empirically that our iterative learning algorithm can efficiently learn these otherwise unlearnable concepts.

\section{Related Work}

The concept of introducing a teacher into the standard learning framework is well studied. One way of categorizing different approaches in this area is by the power given to the teacher in the model. The most na\"{\i}ve approach, which makes the teacher maximally powerful, is to allow the teacher to generate samples with which to train the agent. Such a  model is a complete departure from the PAC setting, since there is no longer a distribution over training examples. This model renders the learning problem trivial in many domains, since the teacher can directly convey exactly what the agent needs to know. That is, if the goal is to learn some sort of function representable by a tree (as in the problems we consider), the teacher can generate $O(1)$ examples for each node in the tree to show the agent the important bits and how they are related. The entire function can be taught with sample complexity $O(k)$ where $k$ is the number of nodes in the function's tree representation. This problem is not one that we are concerned with in our work, as the absence of a distribution on the domain means we will have no means to compare to learning in the standard PAC setting.  More importantly, since the goal of this work is to create a model that works for different learning problems, we do not want to assume that even the most capable of teachers has the ability to devise and construct helpful training examples. The reliance on the environment to provide the agent with experience lends itself to many real world applications.

A possible next logical step in reducing the teacher's power would be to force the teacher to draw examples from a distribution for training and testing the agent. This brings the problem more in line with the standard PAC setting. An attempt at iterative learning with this type of teacher was proposed using a hierarchical learning algorithm presented by \citet{rivest1994formal}. This work attempts to solve the problem of learning Boolean formulae by using a teacher who cannot create examples, but can still edit their labels before presenting them to the learner during the training rounds. They show how such a benevolent teacher can guide a learner to a hypothesis that is ``reliable'' (makes no incorrect classifications), and with high probability is ``useful'' (doesn't return ``I don't know'' too often). Because the agent is allowed to refuse to classify some instances, this algorithm does not fit into the PAC framework in the strict sense. We will show later that we can still meaningfully compare their algorithm to our results, however. 

The more troubling concern with this setup is the potency of the teacher, because the teacher is allowed to alter the training examples' labels during training. When this power is restricted to being used for this particular problem and within the context of the algorithm the authors provide, there is no issue. However, since the goal of our work is to provide a new learning framework for a broad class of problems solved in this iterative way, we cannot allow the teacher to have so much power. Doing so would leave the model open to allowing collusion between the teacher and the learner leading to the design of brittle learning algorithms.
Further, it may not always be the case that even the most helpful of teachers has the ability to alter the reality of the domain and relabel examples. Formal constraints for enforcing ``collusion-free'' interactions between teachers and learners have been addressed~\citep{zilles2011models}. Indeed, the setup of \citet{rivest1994formal} does not qualify as a valid, collusion-free teacher--learner pairing, because it violates the rule that says a teacher must present a set of training samples that is consistent with the concept $c$ being taught.

To further combat this potential for collusion, we restrict the capabilities of the teacher. The restrictions we put in place bring our model in line with the criteria described by \citet{zilles2011models}. In the setting we propose, the teacher is still bound to drawing examples from the testing and training distribution. The teacher cannot alter any labels of the examples drawn from the domain distribution at training or testing time nor can the teacher and learner agree upon an ordering over the sample space. The only capability the teacher has is to disallow certain examples from being presented to the learner. The intuition here is that if an example is too confusing or uninformative, the teacher can block the agent from seeing it. Because this example was drawn from the distribution and evaluated by the teacher, it still counts towards the sample complexity calculation, even though the agent never gets the opportunity to learn from it.

This idea of limiting the data seen by the agent is used in other settings as well. For example, in knowledge distillation \citep{hinton2015distilling}, a subset of training data is used to train a distilled, smaller deep net after a more cumbersome model was already trained. In this setting however, the authors are reusing the training examples to get a more efficient model, whereas our work seeks to limit training data to get a more efficient training algorithm. Additionally, this work falls victim to the same trap of relabeling examples that was discussed previously.

\section{Problem Descriptions}
\begin{figure}
\centering
\begin{minipage}{0.15\textwidth}
\includegraphics[width = 0.8\textwidth]{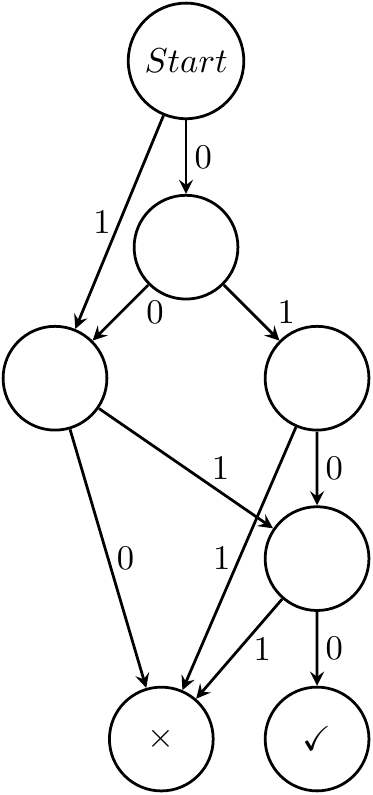}
\end{minipage}%
\begin{minipage}{0.15\textwidth}
\includegraphics[width = 0.9\textwidth]{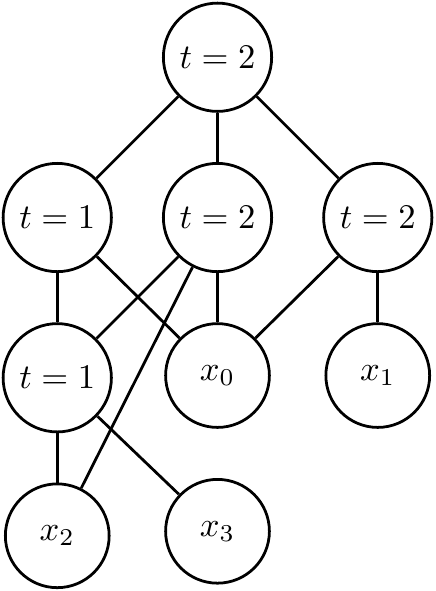}
\end{minipage}%
\begin{minipage}{0.2\textwidth}
\includegraphics[width = 0.9\textwidth]{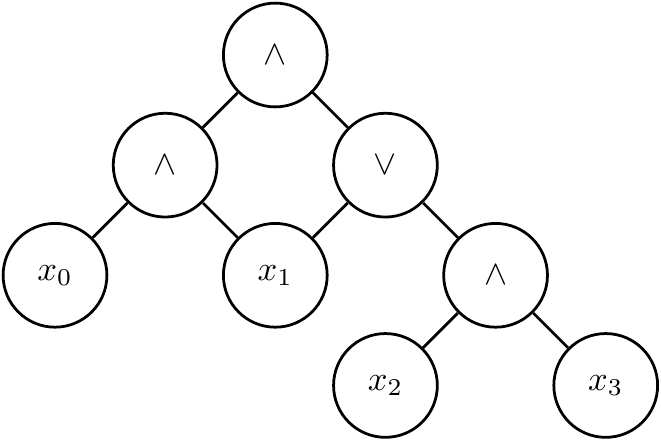}
\end{minipage}%
\caption{The left is the DAG representing an ADFA on Boolean strings with an edge for every possible character, Start, Accept (checkmark), and Reject (X) nodes. The center diagram gives an example threshold circuit where each node is labeled either as an pure input, or as a threshold gate with a threshold of $t$. Finally, at the right is an example DAG representing a Boolean formula. Base nodes are labeled with the pure inputs and internal nodes are labeled with the operator that defines their value.}
\end{figure}

The problems we discuss were chosen not for their independent interest, but because they are all of the problems that are provably unlearnable in the standard PAC setting given by ~\citep{kearns1994cryptographic}.

The first problem, \emph{Boolean Formulae}, is the problem in which the learner is tasked with learning a Boolean formula over $n$ inputs. The domain is $\mathcal{X} = \{0,1\}^n$, $n$-bit strings, and the concept class, $\mathcal{C}_{BF}$, is comprised of all Boolean functions over $n$ inputs that can be represented by Boolean circuits with $O(p(n))$ nodes, where $p(n)$ is some polynomial in $n$. We focus on polynomially sized Boolean circuits, since larger circuits would take too long for an algorithm to describe, let alone learn. Since we only require the algorithm to learn a Boolean formula; it need not learn the exact structure of the circuit representation as long as the concept it chooses is functionally equivalent to the actual formula over the input distribution.

The \emph{Threshold Circuits} problem again has domain $\mathcal{X} = \{0,1\}^n$. The concept class $\mathcal{C}_{TC}$ consists of all constant depth circuits of polynomially many threshold gates in the number of input bits $n$. Each threshold gate consists of a set of $k$ inputs (either input bits or outputs of other threshold gates in the circuit) and will output a $1$ if and only if at least $t$ of the $k$ inputs is $1$, and a $0$ otherwise. This threshold value $t$ is specific to the gate. The goal of the learner here is to learn the circuit (or an equivalent circuit).

Finally, the \emph{Acyclic Deterministic Finite State Automata} (ADFSA) problem involves learning concepts from the class $\mathcal{C}_{\rm \small \it ADFSA}$ of acyclic deterministic finite automata that can recognize strings of length at most $n$. These automata have a single accept and reject state and have a their total number of polynomial in the input size, $n$. For simplicity, we consider $\mathcal{X} = \{0,1\}^n$, the set of all Boolean strings of length $n$, but other alphabets are handled similarly. Again, we only require that the learner decide upon a ADFSA that is functionally equivalent to the target ADFSA over the input distribution.

\section{A Framework for Learning from a Teacher}
The Iteratively Moderated PAC Teaching (IMPACT) model is a generalization of the standard PAC learning model. In the PAC learning model, a problem is defined by a domain $\mathcal{X}$ of possible instances, a concept class $\mathcal{C}$ that consists of functions mapping $x\in \mathcal{X}$ to labels $y\in\{0,1\}$, and a distribution $\mathcal{D}$ over $\mathcal{X}$ from which a set $S = \{x_i\in\mathcal{X}|1\le i\le m\}$ of $m$ training examples is drawn. The learner is tasked with finding a hypothesis $h$ from the hypothesis class $\mathcal{H}$ (mapping $\mathcal{X}$ to binary labels much like the concept class $\mathcal{C}$). The goal is for $h$, with probability $1-\delta$, to have at most $\epsilon$ expected error on testing data drawn from the distribution $\mathcal{D}$. We say a problem is PAC learnable if, for any $\epsilon, \delta >0$, there is an algorithm $\mathcal{A}$ that finds such an $h$ in time (and number of samples) polynomial in $\frac{1}{\delta}$, $\frac{1}{\epsilon}$, and parameters capturing the size of the problem description.

In our model, learning instead takes place in a number of rounds, $R$. In each round, the teacher selects $S_r\subseteq S$, a subset of the training sample, to show the learner. The learner uses these samples to learn some simple concept $c_r\in \mathcal{H},\ 1\le r\le R$. We require that $\mathcal{H}$ be PAC learnable within the context of the round $r$. It is important to note it need not be the case that $\mathcal{H} =\mathcal{C}$. In fact, if $\mathcal{H} = \mathcal{C}$, and if we set $R=1$, then we revert to the standard PAC model. Instead, at the end of each round, the learner can choose to augment its representation of the attribute space in some way to carry information from the previous rounds into future rounds. Incorporating simple concepts from previous rounds allows the learner to gradually build upon knowledge for use in later rounds. This feature is the critical addition to the standard framework that allows the teacher to teach more effectively. The ``simple'' concept class $\mathcal{H}$ remains the same from round to round except that its domain changes to account for the information carried forward from previous rounds.

Additionally, we allow no communication between the learner and the teacher once the learning process has started. The teacher is not allowed to alter or order the examples in any way, except to filter out potentially irrelevant or confusing ones at each round. The learner only sees the set of examples the teacher has allowed, and has no other indication of how many or which examples were moderated. Similarly, the learner cannot communicate back to the teacher what its current hypothesis or augmented attribute space looks like. These constraints prevent the teacher from tailoring future example sets to the learner's progress.
\section{Algorithms and Analyses}  \label{algorithms}

\begin{figure}
\begin{algorithmic}
\floatname{algorithm}{Procedure}
\renewcommand{\algorithmicrequire}{\textbf{Input:}}
\renewcommand{\algorithmicensure}{\textbf{Output:}}
    \REQUIRE set of initial attributes $Z_0$, learner's hypothesis space $\mathcal{H}$, true concept $c\in \mathcal{C}$, examples $S$, number of rounds $R$
    \ENSURE learned concept $h_R\in \mathcal{H}$
    \FOR{$r \gets 1 ,\ldots, R$}
	\STATE $S_r\gets$ \textbf{ModerateSample}($S,r$) \COMMENT {Filter $S$ for round $r$.}
	\STATE $h_r \gets$ \textbf{PACLearn}($Z_{r-1},S_r,\mathcal{H}$)
	\STATE $Z_r \gets$ \textbf{augmentAttributes}($Z_{r-1},h_r$)
    \ENDFOR
\end{algorithmic}
\caption{Pseudocode for the general algorithm.}
\label{alg:general}
\end{figure}
We now present algorithms that solve the problems from previous sections in the IMPACT model. The first, for Boolean Formulae, is the most straightforward. It is followed by the IMPACT algorithm for Threshold Circuits. Lastly, we present a similarly structured algorithm to teach Acyclic Deterministic Finite State Automata. 

To fully describe an algorithm in the IMPACT framework, we must define the setup of the round structure and the moderating rule the teacher uses in each round. Additionally, we must define the simple hypothesis space $\mathcal{H}$, learning algorithm, and attribute space augmentation used by the learner in each round. In all of the examples in this paper, we will follow the same basic algorithm structure, given in Figure~\ref{alg:general}. The rounds will be defined by individual nodes in a directed acyclic graph (DAG) representation of the function to be learned. The teacher will, at each round, moderate the set of samples so that the subset passed to the learner makes sense in the context of the problem. The specifics of how the filtering is done will be described below. Additionally, the hypothesis space and algorithm used by the learner will be problem dependent, but the learner will always augment the attribute space with some version of the learned concept.

The similarity in the structure of these algorithms leads to proofs that follow a general format as well. All of the algorithms rely on the fact that the concept being learned can be represented as a DAG whose size is polynomial in the input size of the problem. In each of the algorithms, we note that the teacher is iteratively moderating rounds of learning according the DAG structure inherent in the problem. The first component of each proof, then, will be to bound the error of the learned hypothesis at each node of the DAG with high probability.

One source for potential concern is that during the training procedure, the teacher is filtering away examples from the data set and therefore, the learner is learning on a different distribution than that which truly exists in the environment. Potentially, this negates any possibility of accurate performance bounds, since the training and test distributions are not the same.

To help argue that the learner's performance is preserved even in the full distribution, we introduce the concepts of \textit{relevance} and  \textit{correlation}. Both are properties of a single example at a particular node in the computation DAG. We say that an example $x$ is relevant at node $i$ if short circuiting and inverting the signal at the output of node $i$ causes the value at the root to invert as well. That is, it tells us whether the output of node $i$ is important in calculating the value at the root of the DAG, given that all other inputs are set according to $x$. Irrelevant examples at node $i$ have the same value at the root regardless of $i$'s output and are determined completely by the calculations made in the other parts of the DAG.  Intuitively, the accuracy of node $i$ is only meaningful only for examples that are relevant at $i$, since any incorrect classification at $i$ would not affect the overall output at the root for irrelevant examples. Correlation also refers to the relationship between the signal at the output at node $i$ and the value at the root of the computation tree. If the values are the same, we say that $x$ is correlated at $i$. If the values are different, we say that $x$ is anticorrelated at $i$.

First, we observe that for all examples that are irrelevant to node $i$, the learner cannot make mistakes, so we do not count these examples as incorrect. We now argue that we can convert any computational DAG into a form that is easier to work with theoretically. 

\begin{claim}
We can restructure any computation DAG, $G$, to form a new computation DAG, $G'$ such that it's size is at most $|G'|\leq 2|G|$. Further, in $G'$, for all nodes $n$, for all examples $R_n$ where $R_n$ is the set of relevant examples at $n$, $R_n$ is either exclusively correlated or anticorrelated.
\end{claim}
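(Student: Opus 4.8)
The plan is to reduce to the case where every internal gate of the DAG is \emph{monotone}, because for a monotone computation the correlation type of every relevant example at a node is forced to be the same. First I would recall that, for a relevant example $x$ at a node $i$, flipping the output of $i$ flips the value at the root, so the root value is either equal to $i$'s output ($x$ correlated) or to its negation ($x$ anticorrelated); which of the two occurs is governed by the ``sensitivity sign'' of the path(s) from $i$ to the root under $x$. The source of a \emph{mixed} relevant set is a gate whose sensitivity to one input depends on its other inputs --- a negation encountered on some sensitizing paths but not others, or reconvergent paths of opposite polarity. The restructuring is designed to eliminate exactly this phenomenon.

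The construction I would use is a ``dual-rail'' negation-normal-form transformation. For each node $g$ of $G$ I create two nodes in $G'$: a positive copy $g^{+}$ computing the same value as $g$, and a complementary copy $g^{-}$ computing $\neg g$. The wiring uses De~Morgan locally: if $g = a \wedge b$ then $g^{+} = a^{+}\wedge b^{+}$ and $g^{-} = a^{-}\vee b^{-}$ (symmetrically for $\vee$); a threshold gate ``at least $t$ of $k$'' is complemented by the gate ``at least $k-t+1$ of $k$'' on the complemented inputs, which is again a monotone threshold gate; and a negation $g=\neg a$ becomes a rail swap $g^{+}=a^{-},\ g^{-}=a^{+}$, introducing no new non-monotone gate. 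Designating $\mathrm{root}^{+}$ as the root of $G'$ shows that $G'$ computes the same function as $G$, and since each original node yields at most two nodes (negations collapsing into rail swaps), $|G'|\le 2|G|$.

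After this transformation every internal gate of $G'$ is monotone and every remaining negation sits at an input literal. I would then prove the key lemma: for any node $n$ of $G'$, the value at $\mathrm{root}^{+}$ is a monotone non-decreasing function of the output of $n$, with all other free inputs fixed, because raising $n$'s output from $0$ to $1$ can only raise the output of each monotone gate it feeds, and this effect propagates through every path --- including reconvergent ones --- up to the root. Consequently, if $x$ is relevant at $n$ then flipping $n$'s output flips the root in the same direction, forcing the root value to equal $n$'s output; that is, every relevant example at $n$ is correlated. Hence $R_n$ is exclusively correlated (a special case of the claimed dichotomy) for every node $n$, which establishes the statement.

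The step I expect to be the main obstacle is guaranteeing the size bound and purity \emph{simultaneously}. A naive ``push the negations down to the leaves'' argument can blow up super-linearly on a DAG with fan-out, since a shared subcircuit may be needed in both polarities; the dual-rail construction is precisely what keeps the growth to a factor of two, and I would need to argue carefully that it never produces a non-monotone internal gate (the threshold-complement identity being the delicate case). A secondary subtlety is checking that the monotonicity argument really applies to reconvergent DAGs rather than only to trees, and that the per-node complementation used for the ADFSA computation graph stays inside the allowed gate model; I would verify that each problem's gate set is closed under the dual-rail complement so that the monotone conclusion holds uniformly across all three concept classes.
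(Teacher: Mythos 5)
Your proposal is correct, and the underlying construction is the same as the paper's: push negations to the leaves via De~Morgan, keeping one positive and one complemented copy of each node to get the factor-of-two size bound. Where you genuinely diverge is in the justification of the ``exclusively correlated'' property. The paper argues that correlation is determined by the parity of negations on \emph{the} path from node $i$ to the root, and that with no internal negations this parity is always even; that argument is phrased for a single path (indeed the lemma's proof says ``the path from node $i$ to the root of the tree''), which is not well defined in a DAG with fan-out and reconvergent paths. Your key lemma---that after the transformation the root is a monotone non-decreasing function of each internal node's output, so a relevant flip at $n$ moves the root in the same direction---sidesteps the path-counting entirely and is the cleaner statement for genuine DAGs; it is also what lets you be explicit that the naive ``push negations down'' can blow up on shared subcircuits and that the dual-rail bookkeeping is what caps the growth at $2|G|$, a point the paper asserts but does not argue. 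Finally, your complementation identity for threshold gates (``at least $t$ of $k$'' dualizes to ``at least $k-t+1$ of $k$'' on complemented inputs) extends the claim to the threshold-circuit class, which the paper invokes later but never verifies in its proof of this claim. In short: same construction, but your monotonicity lemma is a more robust and more general route to the correlation dichotomy.
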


The above claim is accomplished via a transformation we define here on $G$. Consider $G$ to be a computation DAG represented by leaves which are literals of the Boolean function $F$ and internal nodes which are $and$, $or$, or $not$. By deMorgan's law, we can propagate all the $not$ nodes to just above the literals. Doing so creates at most two versions of each node in the DAG, the original, and its compliment. An example of this process is shown in Figure~\ref{fig:demorgan}.

Relevant examples at node $n$, by definition, propagate their output signal to the root of the tree through unblocked $and$ and $or$ nodes. Since both non-blocking $and$ and $or$ nodes pass through whatever signal is applied to their unblocked input, correlation is determined entirely by the parity of number of negations on the path back to the root. That is, an even number of negations on the root-node path means relevant examples will be correlated and an odd number implies relevant examples will be anticorrelated. In the restructuring, since we push all negations to the literals, all internal nodes will have relevant examples that are always correlated.

This means we can restructure any computation DAG such that it meets the criteria for Lemma~\ref{lem:relevance} and therefore Theorem~\ref{thm:relevance}, as well. Since we have at most twice as many nodes in the DAG, we have increased the overall sample complexity (and computational complexity) of the problem by a factor of $2=O(1)$.

\begin{lemma}
\label{lem:relevance}
Given a computational DAG where all negation nodes reside just above the leaves, the set of all relevant examples at a node will always be exclusively correlated. The set of irrelevant examples at a node may be of mixed relevance.
\end{lemma}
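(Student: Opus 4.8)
The plan is to reduce the statement to a one-variable monotonicity fact. Fix a node $i$ and an example $x$. I would define a function $f_x : \{0,1\} \to \{0,1\}$ by letting $f_x(z)$ be the value computed at the root of the DAG when the output of node $i$ is forced to the value $z$ and every leaf is set according to $x$. With this notation, relevance of $x$ at node $i$ is exactly the assertion that $f_x(0) \neq f_x(1)$, and correlation is the assertion that the root value equals node $i$'s own output value. So the entire lemma becomes a statement about the two-element function $f_x$, and the task is to read off its possible shapes.

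The key observation is that $f_x$ is monotone nondecreasing in $z$. This is where the hypothesis that all negation nodes reside just above the leaves does the work: since node $i$ is an internal node (or a post-negation leaf output), no negation gate lies on any directed path from node $i$ to the root, so the root is obtained from $z$ by a subcircuit whose gates are exclusively $and$ and $or$, with all other inputs frozen to constants determined by $x$. Both $and$ and $or$ are monotone, and a composition of monotone gates is monotone, so $f_x$ is a monotone nondecreasing function of $z$ no matter how node $i$'s signal fans out and reconverges on its way to the root.

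From monotonicity the conclusion is immediate. A monotone nondecreasing function on $\{0,1\}$ is either constant ($f_x \equiv 0$ or $f_x \equiv 1$) or the identity ($f_x(z) = z$); in particular it can never be the map $z \mapsto \neg z$. If $x$ is relevant, then $f_x(0) \neq f_x(1)$, which forces $f_x$ to be the identity, and hence the root value equals node $i$'s output value---that is, $x$ is correlated. This establishes the first sentence. For the second sentence, irrelevance means $f_x$ is constant, so the root value is fixed independently of node $i$'s output while that output may itself take either value; this is exactly why the irrelevant examples can split into both correlated and anticorrelated cases.

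The step I expect to be the main obstacle is justifying monotonicity in the genuine DAG setting rather than in a tree. The informal discussion preceding the lemma argues via ``the parity of the number of negations on the path back to the root,'' but this phrasing tacitly assumes a unique path and does not by itself rule out cancellation when node $i$'s signal reaches the root along several reconvergent paths. Recasting the argument as the monotonicity of $f_x$ is precisely what makes it robust to fan-out and reconvergence, so I would state the composition-of-monotone-gates claim carefully and emphasize that it is exactly the absence of negations above node $i$---guaranteed by the restructuring---that prevents any anti-monotone (i.e.\ $\neg z$) dependence from arising.
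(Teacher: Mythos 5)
Your proof is correct, and it is worth noting how it differs from the paper's own argument. The paper proves the lemma by tracing the signal along ``the path from node $i$ to the root of the tree'': for a relevant example the $and$/$or$ nodes on that path are unblocked, the signal passes through unchanged, and since no $not$ node is encountered the root agrees with node $i$. Your version replaces this signal-tracing with the observation that the induced one-variable function $f_x(z)$ (root value as a function of node $i$'s forced output) is a composition of monotone gates, hence monotone, hence either constant or the identity on $\{0,1\}$; relevance rules out the constant case. The underlying intuition---no negations above node $i$---is the same, but your formulation is genuinely more robust: the paper's proof tacitly assumes a unique node-to-root path, which is appropriate for a tree but not for a DAG with fan-out, where node $i$'s signal can reach the root along several reconvergent paths and the notion of ``the'' unblocked path is not well defined. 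Your monotonicity argument handles reconvergence automatically, and you are right to flag this as the point where the informal parity-of-negations discussion preceding the lemma would otherwise be fragile. You also supply a proof of the lemma's second sentence (irrelevant examples may be of either correlation, since $f_x$ constant leaves node $i$'s own output free), which the paper's proof omits entirely. The only cost of your approach is the need to state the composition-of-monotone-functions step, which is standard.
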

\begin{proof}
Consider the path from node $i$ to the root of the tree. The signal from $i$ travels through $and$ and $or$ nodes. By definition, those nodes must be unblocked in all relevant examples, so the signal passes through. Since there are no $not$ nodes for the signal to pass through, all of the relevant examples must be correlated.
\end{proof}

\begin{theorem}
\label{thm:relevance}
Training on only relevant data does not degrade test performance on the full data set.
\end{theorem}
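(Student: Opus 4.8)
The plan is to bound the root (test) error of the fully learned circuit $h$ on the \emph{full} distribution $\mathcal{D}$ by a sum of per-node error terms, and then observe that each of those terms is precisely the quantity that is driven small when node $i$ is PAC-learned on its relevant examples. Writing $\mathrm{err}(h)=\prob{h(x)\neq c(x)}$ for $x\sim\mathcal{D}$, the goal is to show $\mathrm{err}(h)\le\epsilon$ even though each node's hypothesis $h_i$ was trained on the relevant-conditional distribution rather than on all of $\mathcal{D}$. First I would make precise the earlier observation that irrelevant errors cannot propagate: if $h_i$ disagrees with the true node function $c_i$ on an input $x$ that is irrelevant at $i$, then by the very definition of relevance (inverting node $i$'s output leaves the root unchanged) this disagreement is invisible at the root.

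The central structural step is a telescoping / hybrid argument that charges every root error to a single node at which $h$ errs \emph{and} at which $x$ is relevant. Order the nodes topologically from leaves to root as $1,\dots,N$ with $N=|G|$, and define interpolating circuits $H_0,\dots,H_N$, where $H_j$ evaluates nodes $1,\dots,j$ with their learned functions $h_i$ and nodes $j{+}1,\dots,N$ with their true functions $c_i$, so that $H_0=c$ and $H_N=h$. For any $x$ with $h(x)\neq c(x)$ there is a first index $j$ with $H_{j-1}(x)\neq H_j(x)$; since $H_{j-1}$ and $H_j$ differ only in whether node $j$ applies $c_j$ or $h_j$ to the same child values, this means flipping node $j$'s output flips the root while all downstream nodes still compute truly, which is exactly relevance of $x$ at $j$ together with $h_j\neq c_j$ on $x$. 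A union bound then yields $\mathrm{err}(h)\le\sum_i \prob{h_i \text{ errs at } i \text{ and } x \text{ relevant at } i}$.

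Each summand factors as $\prob{\text{relevant at }i}\cdot\prob{h_i \text{ errs}\mid\text{relevant at }i}\le\epsilon_i$, where $\epsilon_i$ is the error of $h_i$ measured against the \emph{relevant-conditional} distribution, i.e. the distribution on which node $i$ is actually trained. Because $\mathcal{H}$ is PAC-learnable within each round, I would drive each $\epsilon_i\le\epsilon/N$ with failure probability at most $\delta/N$ using $\mathrm{poly}(N,1/\epsilon,1/\delta)$ samples per node; a final union bound over the polynomially many nodes gives total error at most $\epsilon$ with confidence $1-\delta$ at polynomial overall cost. This establishes that restricting training to relevant data controls exactly the error terms that enter the full-distribution bound, so test performance is not degraded.

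The main obstacle is reconciling \emph{relevance}, which is defined with respect to the true circuit (all other nodes computing correctly), with the fact that in $h$ the inputs feeding node $j$ are themselves learned and possibly corrupted. I resolve this by taking the leaves-to-root order and tracking the \emph{first} divergence: at that $j$, the downstream nodes still use the true functions, so the propagation of node $j$'s flip to the root is governed precisely by the true-circuit relevance condition, and the corruption of node $j$'s inputs is harmless because it only determines \emph{where} $h_j$ is asked to predict, not whether a flip there reaches the root. Confirming this carefully, and using Lemma~\ref{lem:relevance}'s guarantee that the relevant set at each node is \emph{exclusively} correlated (so the learner trains toward a single, well-defined target label on that set, making $\epsilon_i$ a meaningful PAC error), is the crux; this is exactly the point at which the negation-pushing transformation of the preceding Claim is invoked, and it is what completes the proof of Theorem~\ref{thm:relevance}.
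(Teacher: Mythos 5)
Your proposal is considerably more ambitious than what the paper actually proves under this heading: in the paper, Theorem~\ref{thm:relevance} is a purely local, per-node observation --- irrelevant examples contribute zero error at a node by definition, so the node's error under the full distribution, $\prob{\text{relevant at } i}\cdot\prob{h_i \text{ errs}\mid \text{relevant at } i}$, is at most its error under the relevant-conditional (training) distribution --- while the accumulation of error across nodes is handled separately in the Boolean Formulae analysis via the triangle-inequality recursion $||h(\tilde{x})-h(x)||_D\le\epsilon_L+\epsilon_R$ and Hoeffding. Your factorization step $\prob{\text{relevant}}\cdot\prob{\text{err}\mid\text{relevant}}\le\epsilon_i$ reproduces the paper's local argument correctly. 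The problem is the hybrid argument you build on top of it to get the end-to-end bound.

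The gap sits exactly at the point you call the crux, and your proposed resolution does not close it. In the hybrid $H_{j-1}$ the gates above node $j$ do apply their \emph{true gate functions}, but their \emph{side inputs} --- the children hanging off the $j$-to-root path --- are computed from nodes of index less than $j$, i.e.\ from learned and possibly corrupted values. Whether a flip at node $j$ reaches the root in $H_{j-1}$ is therefore governed by these corrupted side values, not by the true-circuit relevance condition. Concretely, an example $x$ can be irrelevant at $j$ in the true circuit (blocked by a side gate whose true value is $0$) yet have the flip propagate in the hybrid because an upstream learning error set that side gate to $1$. Such an $x$ is excluded from node $j$'s training distribution, so the per-round PAC guarantee says nothing about $h_j$ on it, and yet it contributes to $\prob{H_{j-1}(x)\neq H_j(x)}$; the inequality $\prob{H_{j-1}\neq H_j}\le\prob{h_j\text{ errs}\ \wedge\ x\text{ relevant at }j}$ therefore fails. (A secondary issue: the divergence event requires $h_j(\tilde{x})\neq c_j(\tilde{x})$, whereas training controls $h_j(\tilde{x})$ against the uncorrupted target $c_j(x)$, so an additional triangle inequality is needed there as well.) The argument is repairable --- charge the change of relevance status at $j$ to the event that some upstream node erred --- but that extra bookkeeping is essentially the paper's $\epsilon_L+\epsilon_R$ accumulation, which your write-up omits rather than replaces.
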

\begin{proof}
The teacher is only filtering examples. Therefore, the teacher is sampling from the distribution where the sum of the weight of irrelevant examples has been distributed uniformly across relevant examples. Since irrelevant data cannot be misclassified at a particular node, the overall probability of error can only decrease at test time. That is, the amount of weight on the examples on which the learner errs in the training distribution can only be at most that of the weight of misclassified examples during testing time. 
\end{proof}

Next, we will argue that if the individual rounds are learnable with a polynomially sized input (that is, the subsample provided by the teacher is required to only be polynomially sized), then the overall sample complexity for the problem remains polynomial as well. Finally, we will make an argument that, in the final round of learning, the error accumulated throughout the entire algorithm is bounded in a way that enables PAC bounds for the entire algorithm.

\subsection{Boolean Formulae}
To learn Boolean Formulae, the teacher represents the target formula as a polynomially sized DAG with branching factor $2$. Then, the teacher draws a polynomially large training sample, $S$. The teacher arranges rounds in postfix order (children before the parent). For each round, the teacher then partitions the examples into sets such that the label given at the target node is correlated or anti-correlated to the label given at the root (true label). Finally, the learner is presented with the larger of the two sample subsets. The learner then uses $\epsilon$-exhaustion~\citep{mitchell1997machine} over a hypothesis space of the basic Boolean operators (\emph{AND}, \emph{OR}) on all possible pairs of input attributes and their negations. Since we know that this hypothesis space is finite, and polynomially large, $\epsilon$-exhaustion will be efficient. The output of each of these rounds will be stored as an additional derived input.  To elegantly handle negation nodes (and account for the fact that we may be learning from anti-correlated examples), the learner will also store the complement of the hypothesis learned during each round. We will repeat this process for each node in the DAG until the learner learns the root node (true function). We will now prove that this algorithm is correct (within the standard PAC bounds) and efficient.

First, to argue that the learner has small error at any node, we note there are two node types in the DAG that can be the target for any given round. The first type, referred to henceforth as a base node, is a node whose children are only pure variables of the Boolean function, that is, not derived attributes. An internal node, then, is one that we define as having at least one child that is a base or internal node. The total number of internal and base nodes in the DAG representation will be denoted by $N$. As we mentioned previously, the DAG is polynomial in the size of the input, so $N=poly(n)$.

In the case of base nodes, we refer to classic PAC learning with a finite, realizable hypothesis class~\citep{mohri2012foundations} to argue that with high probability we can learn the relationship that governs the output of this node with sufficient accuracy, given sufficiently many relevant examples from the teacher. It is important to realize that since the learner is only attempting to choose two variables and their appropriate operand from potentially $O(N)$ variables, the size of the hypothesis space is bounded by
\[
|\mathcal{H}|=O\left({{N}\choose{2}}\right)=O(N^2)=O(poly(n)).
\]
Here, we use $N$ instead of $n$, since the learner may have access to some derived inputs at the time of learning the node, and it has the option of choosing from these as well.

At all internal nodes (including the root), learning is slightly more complicated. At these nodes, due to the possibility of errors in previous rounds of learning (that is, at nodes within the subgraphs rooted at the current target node), our derived input values may be erroneous. To higher level learning problems, these errors are experienced as corrupted versions of the derived inputs the algorithms need. So, to cleanly derive the bounds on learning error, we first introduce some notation. We denote the hypothesis returned by the learning algorithm in a particular round $r$ as $h_r$, and the hypothesis induced at a particular node by the overall formula as $h^*_r$---the target hypothesis. By construction, $h^*_r, h_r \in \mathcal{H}$. Here, $h_r(x)$ and $h^*_r(x)$ are meant to be the value that the respective hypothesis has on data assuming all nodes have been learned perfectly. That is, both $h_r$ and $h^*_r$ provide a formula for combining some other two nodes in the DAG thus far (either pure or derived), i.e. $x_2\wedge x_4$. Using that formula, we evaluate the hypothesis on the uncorrupted versions of the contributing nodes. Contrastingly, $h^*_r(\tilde{x})$ and $h_r(\tilde{x})$ have the formula of the respective hypotheses, but are evaluated on the possibly corrupted contributing nodes. It is crucial to our analysis that $h(x)$ and $h(\tilde{x})$ need not be equivalent if the contributing nodes are corrupted, for any $h\in \mathcal{H}$.

Because it is the case that these hypotheses might not have the same labels, we have a notation to express the difference between these hypotheses on similar inputs. We denote the difference between two (possibly the same) hypotheses on a distribution $D$ of inputs (either corrupted or pure) as
\[
||f(x)-g(\tilde{x})||_D=\textbf{E}_{x\sim D}[f(x)\neq g(\tilde{x})].
\]
This notation is understood to be the expected number of samples that will disagree when we evaluate $f$ on uncorrupted data and $g$ on corrupted data. Similarly, we denote the difference between two functions on a sample $S$ of inputs as
\[||f(x)-g(\tilde{x})||_S=\frac{1}{m}\sum_{i=1}^m 1\{f(x_i)\neq g(\tilde{x_i})\}.
\]
This notation is understood in the same way, except, here, the expectation is replaced by a sum over a sample, $S$, of inputs. The value tells us the fraction of samples on which the two disagree. Not surprisingly, these two values, when applied to $h_r(\tilde{x})$ and $h^*_r(x)$, correspond to our testing and training errors, respectively. They are a measure of the difference between the algorithm's hypothesis' classification of the potentially corrupted derived inputs it uses versus the true concept's classification of the uncorrupted input data.

To show a bound on the testing error of our chosen hypothesis, we argue that, at a particular node, for any $h\in \mathcal{H}$,
$||h(\tilde{x})-h(x)||_D\le \epsilon_L + \epsilon_R$,
where $\epsilon_L, \epsilon_R$ are the errors of the left and right children, respectively. Intuitively, any single hypothesis can only classify the same input differently if a contributing child node has corrupted the signal. In other words, $h(x)$ and $h(\tilde{x})$ can only differ where $x$ and $\tilde{x}$ differ. We know that $\tilde{x}$ only differs from $x$ where some contributing node made an incorrect classification, and therefore is bounded by $\epsilon_L+\epsilon_R$.

Hoeffding's inequality~\citep{mitzenmacher2005probability} gives us a high probability upper bound on the testing error at a node:
$$ 
\prob{\left| ||h_r(\tilde{x})-h^*_r(x)||_S - ||h_r(\tilde{x})-h^*_r(x)||_D \right|\ge \epsilon}
$$
\vspace{-1ex}
$$
\le 2e^{-2m\epsilon^2} \le \delta.
$$ 
Rearranging the above equation gives us a bound of $m\ge \frac{\ln(\delta/2)}{2\epsilon^2}$ required samples to learn the node. Since the teacher is partitioning the original sample into at most $2$ subsets as long as we have $2\frac{\ln(\delta/2)}{2\epsilon^2}$ examples to start with, we'll always have sufficient training data at a node, and the overall sample will remain polynomial in size.

Further, the above implies with probability $1-\delta$:
\begin{align*}
||h_r(\tilde{x})-h^*_r(x)||_D &\le ||h_r(\tilde{x})-h^*_r(x)||_S + \epsilon \\
    &\le ||h^*_r(\tilde{x})-h^*_r(x)||_S + \epsilon \\
    &\le \epsilon_L + \epsilon_R + \epsilon.
\end{align*}%
Since the above holds for any node in the DAG, it holds for the root. More importantly, we can recursively replace the error at each child by the error at the root of that subtree plus the errors of its children. Doing so for the entire DAG reveals that the error for the entire function (the root of the DAG) is the sum of the errors at every node in the DAG. 

Say we want to learn the function with error at most $\epsilon_T$. In this case, we would require that, at each node, the error $\epsilon_i$ be at most $\epsilon_T/N$ where $N$ is the number of nodes in the DAG. Substituting this quantity into our bound, at each node we need $ \frac{N^2\ln(\delta/2)}{2\epsilon_T^2}$ examples. Importantly, we note that this quantity is polynomial in $1/\delta,\ 1/\epsilon_T,$ and $n$, since $N=poly(n)$ by the problem definition. Thus, we have shown that we can bound the error at any node in the tree to be small with high probability, that the number of samples required to do so is polynomial in size, and that the error at the root of the tree remains small with high probability as well.

\subsection{Threshold Circuits}

The algorithm for Threshold Circuits is very similar to that of Boolean Formulae. One difference, however, is that the learner does not learn binary Boolean functions at each round. Instead, the learner learns a perceptron~\citep{rosenblatt1958perceptron}. We can set a perceptron with weights over all inputs (some of which may be set to $0$) that can fully capture the behavior of the gate. For example, say we have a threshold gate with $k$ inputs and a threshold of $t$. A perceptron can model this gate by putting weight $\frac{t}{k}$ on the $k$ relevant inputs, and $0$ elsewhere.

The teacher also uses a filtering strategy that is identical to the Boolean Formula algorithm. The teacher trains for one round at each gate in the circuit in postfix order. The teacher presents examples that are correlated (or anti-correlated) to the learner at every round.

In fact, our proof for Boolean Formulae learned from their respective circuits had no machinery that was specific to Boolean formulae. We only require that the learner's concept at every round is PAC learnable~\citep{blumer1989learnability}, that even after the teacher moderates the set of samples, we have enough data to learn efficiently, and that after the final round of learning, the learner returns a hypothesis that is accurate within the standard PAC bounds. Since linear separators are PAC learnable~\citep{blumer1989learnability}, we use the same technique to ensure that we always present at least half of the sample data for training, and we have just described how linear separators are sufficient to represent the behavior of threshold gates. Thus, the previous argument holds for the Threshold Circuit case as well.

\subsection{Acyclic Deterministic Finite State Automata}

At first glance, teaching ADFSA seems like an entirely different problem. There are issues related to defining accepting and rejecting states, and evaluating partial solutions at each round of teaching. However, with careful design of both the teacher's moderating algorithm and the learner's hypothesis space and algorithm, we can overcome these challenges and show that ADFSA are indeed learnable in the IMPACT model. To simplify the explanation, we consider ADFSA that branch only on binary inputs. A larger alphabet can be handled similarly.

Again, the teacher will teach nodes of the ADFSA in postfix order. However, we now have to be more careful about how to moderate the set of examples for the learner. Since strings of variable length (up to $n$) can be classified by this ADFSA, the teacher must be careful to not present confusing or ambiguous strings. That is, if the teacher is not careful, negative examples can be almost meaningless. Because the learner has no information about the path traversed in the ADFSA that has led to the target node, it has no way of knowing on which bit in each example the target node is evaluating the string. To rectify this situation, the teacher filters on multiple properties of the examples, while still only requiring an initial sample that is polynomial in size, detailed next. 

The teacher pulls a sample from the domain distribution. First, we want to ensure that the learner can evaluate all of the examples in the training sample from the same offset within the string. So, the teacher partitions the sample into (at most) $n$ subsets, depending on how long each string takes to arrive at the target node. Then, the teacher partitions each of these examples into correlated and anti-correlated subsets, for a total of at most $2n$ subsets. The teacher then chooses the largest among these subsets to pass to the learner. It is important to understand that examples that never touch the target node are also included, and they are placed in all $n$ subsamples. Their label at the target node can still be calculated by the teacher, who knows the full ADFSA, when deciding if they are considered correlated or anti-correlated data when evaluated at that subset's offset. 

At each round, the learner must create a new ADFSA node and choose where the two outgoing edges get connected. The attribute space for the learner begins with just an accepting and a rejecting terminal node. After each round, the learner augments the attribute space with its current hypothesis and the complement of the current hypothesis. The complement has identical structure, but any terminal accepting or rejecting nodes are swapped. Similarly to the Boolean Formulae case, the hypothesis space $\mathcal{H}$ is polynomially sized. Again, the learner can use $\epsilon$-exhaustion of the hypothesis space to choose the best hypothesis for a given node. Because the learner knows to expect aligned examples (something decided before the learning process begins), it can efficiently evaluate potential hypotheses in the  following way. For each possible offset of the input, evaluate all potential hypotheses, then choose the one that most closely matches the data. Since the inputs are of size at most $n$, and there are at most $N=poly(n)$ rounds of learning, this evaluation can be done exhaustively in polynomial time.

Again, the proof from Boolean Formulae holds for this case as well. The finite hypothesis space over which learning is taking place is different, but the structure of the problem and the algorithm remain the same. The only worry that remains is that, in this much more aggressive filtering scheme, we may require a huge initial sample to ensure that sufficiently many examples remain in the subsample passed to the learner. We know that the learner requires, at each node, $O(poly(n))$ examples. Also, our filtering scheme described above guarantees (by choosing the largest remaining subsample) that we pass a subsample whose size is at least $\frac{1}{2n}$, since the first step partitions into $n$ subsets, and then we further partition each of those into $2$ subsets. That means we need our overall sample size to be $2n\ poly(n)$, so that, after this partitioning, we still have sufficiently many samples. Since $2n\ poly(n) = O(poly(n))$, we still satisfy the sample-complexity constraint.
\begin{figure}
\begin{tabular}{| p{0.5\textwidth} | p{0.5\textwidth} |} \hline
\textbf{\citet{rivest1994formal}}		& \textbf{IMPACT} \\ \hline
algorithm to solve a problem for a particular domain	& learning framework for problems covering many domains \\ \hline
teacher can relabel examples	& teacher can only filter examples \\ \hline
reliable	& bounded mistakes (``usually reliable'') \\ \hline
useful	& perfectly useful (no ``don't know''s) \\ \hline
small sample complexity	& at most double the sample complexity \\ \hline
\end{tabular}
\caption{A table comparing our work with that of \citet{rivest1994formal}.}
\label{tab:compare}
\end{figure}
\subsection{Comparison to Previous Work}
We have just shown analytically how we achieve PAC type bounds on our learning problems using the IMPACT framework. To compare directly to \citet{rivest1994formal}, we can alter the type of learner used in our algorithm slightly. Using a learner that returns ``don't know'' whenever $\epsilon$-exhaustion doesn't result in a set of hypothesis that are all consistent, we can guarantee reliability. Further, our original algorithm only  makes a mistake when there are these same disagreements between remaining hypotheses. Therefore, our mistake bound still holds as a bound on usefulness. So, both our method and that of \citet{rivest1994formal} result in reliable, useful classifiers using an amount of training data that is polynomial in $n, \epsilon,$ and $\delta$. This proves that the additional power of allowing the teacher to alter training labels only gains a small reduction in sample complexity, and is unnecessary for learnability. A side-by-side comparison of the two works is shown in Figure~\ref{tab:compare}, for clarity.

\section{Experimental Results}
To demonstrate the practical implications of our approach, we conducted a series of experiments in which we taught from the class of polynomially sized Boolean Formulae to a learner. The target function we taught was the parity function, which can be quite difficult to learn from examples. Formally, given an input of size $n$, we define the target function to be the parity of a subset of $k$ input bits. We created a sample of $m$ examples, drawn uniformly at random from the space $\{0,1\}^n$. Our IMPACT algorithm for Boolean Functions was then provided different subsets of this sample via the teacher's filtering rule. To compare with learning algorithms developed for the standard supervised learning model, we also gave the full sample to various learning algorithms. Finally, from the same distribution, we created a test sample of $1,000$ examples. We ran this experiment for $10$ independent trials, and averaged over the results. We ran these experiments both on our algorithm and a set of algorithms selected from Weka's library~\citep{hall2009weka}. The algorithms we chose, to represent a variety of techniques and performances, were RandomTree, LADTree~\citep{Holmes2001}, BFTree~\citep{Shi2007,Friedman2000}, J48~\citep{Quinlan1993}, SimpleCart~\citep{Breiman1984}, AdaBoostM1~\citep{Freund1996}, and DecisionTable~\citep{Kohavi1995}.
\begin{figure}
\centering
\includegraphics[scale = .5]{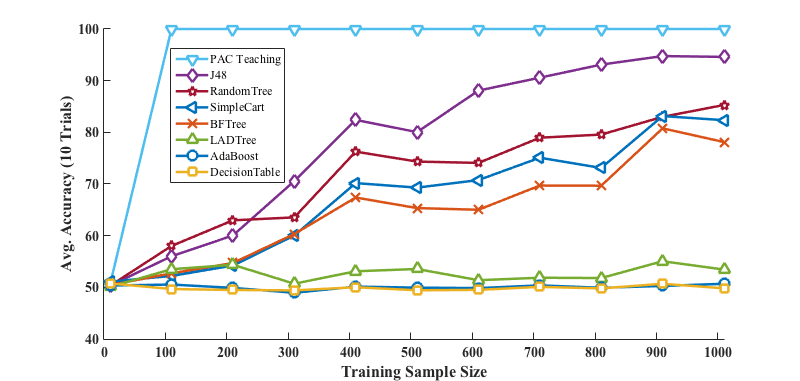}
\caption{Comparison of average accuracy over $10$ trials versus sample complexity for an IMPACT learner and standard PAC learners. The function used for this experiment was $f(x_0, \ldots, x_9) =  x_1 \otimes x_6 \otimes x_8 \otimes x_9$ with an input size of $n=10$.}
\label{fig:training}
\end{figure}
%
With a training sample size of $m=10$, all of the algorithms are unable to classify better than random guessing. However, once we begin to introduce some additional training data, the IMPACT algorithm quickly outperforms the standard supervised learners. In fact, once the training sample has reached roughly $100$ examples, our learner reliably achieves $100\%$ accuracy, while the standard learners are still within $10\%$ of random guessing. Figure~\ref{fig:training} shows the accuracy of each of these algorithms on the test sample for various training sample sizes.

\begin{figure}
\centering
\includegraphics[scale = .5]{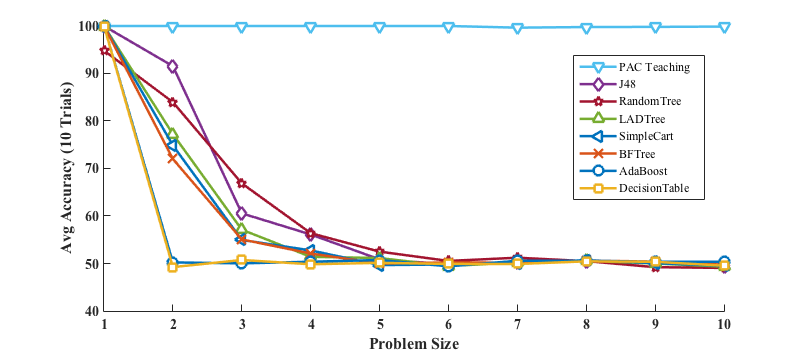}
\caption{Comparison of average accuracy over $10$ trials versus problem size, $k$, for our IMPACT learner and standard supervised learners. All input for this experiment was of size $n=10$.}
\label{fig:input}
\end{figure}

Additionally, we explored the effect of the size of the function to be learned, $k$, on our algorithm and the standard PAC algorithms. 
Figure~\ref{fig:input} shows how well each of the algorithms performed given $75$ training examples for problems of various sizes. The input was of size $n=10$, and the value on the x-axis shows $k$, how many bits of the input contributed to the parity function. All of the algorithms were able to correctly learn when the output was a direct copy of a single bit of input. However, once the function was classifying the parity of two bits, $75$ samples was insufficient for both AdaBoost and the DecisionTable. The rest of the standard algorithms reverted to having accuracy on par with random guessing by the time there were $7$ contributing variables. Our algorithm, however, remains above $95\%$ accuracy for all values we tested.
Choosing a larger sample size would have allowed some of the other algorithms to survive a bit longer. However, anything much larger than $75$ caused our algorithm to have $100\%$ accuracy for problem sizes that were much larger than we were considering.

\section{Conclusion}
In this work, we have developed a new learning framework that allows simple learners to learn complex and varied concepts. Our theoretical and empirical results were accomplished by introducing a knowledgeable, benevolent teacher to the framework, as well as by allowing the learner to build up knowledge across a series of discrete learning interactions. To illustrate the additional power of our new model, we presented a series of problems that were previously known to be unlearnable in the standard framework, given widely accepted cryptographic assumptions. We proved theoretically and verified experimentally that, under our model, these concepts are PAC Teachable. 

We recognize that the additional assumptions inherent in this model, that there is a teacher who knows the target concept and how to properly teach it, are non-trivial, and in some situations infeasible. However, by applying these ideas to other machine-learning fields like reinforcement learning, we think that there is potential for an even larger impact and that our assumptions are even more reasonable. Consider the problem of an end user training a household robot. It seems reasonable to assume that the end user, acting as the teacher, has a clear idea of the policy it wishes the robot to internalize. In settings like robotics, and in artificial intelligence more generally, a generalized version of the model we present could potentially be very powerful in creating competent agents. Doing so is one avenue we propose to explore in future work.

\newpage

\bibliographystyle{plain}
\bibliography{teaching}

\end{document}